\newtheorem{theorem}{Theorem}
\newtheorem{proposition}[theorem]{Proposition}
\newtheorem*{observation}{Observation}
\newtheorem{definition}[theorem]{Definition}
\newcommand{\ifLatexThree}[2]{\@ifpackageloaded{xparse}{#1}{#2}}
\newcommand{\ifAMSmath}[2]{\@ifpackageloaded{amsmath}{#1}{#2}}
\newcommand{\ifMathSCR}[2]{\@ifpackageloaded{mathrsfs}{#1}{#2}}
\newcommand{\ifMathHyperREF}[2]{\@ifpackageloaded{hyperref}{#1}{#2}}
	\NewDocumentCommand{\headword}{s o m}{\IfBooleanTF{#1}{#3}{\textbf{#3}}\IfNoValueTF{#2}{\index{#3}}{\index{#2}}}%
	\def\@headword#1{\textbf{#1}\index{#1}}%
	\def\@@headword#1{#1\index{#1}}%
	\def\headword#1{\@ifstar\@headword{#1}\@@headword{#1}}%
\newcommand{\textlabelmarker}[1]{%
	\protected@edef\@currentlabel{#1}%
	\phantomsection%
}
\newcommand{\textlabel}[2]{%
	\textlabelmarker{#1}%
	#1\label{#2}%
}
\newcommand*{\centernot}{%
	\mathpalette\@centernot
}
\def\@centernot#1#2{%
	\mathrel{%
		\rlap{%
			\settowidth\dimen@{$\m@th#1{#2}$}%
			\kern.5\dimen@
			\settowidth\dimen@{$\m@th#1=$}%
			\kern-.5\dimen@
			$\m@th#1\not$%
		}%
		{#2}%
	}%
}
\DeclareRobustCommand\nmableitSymb{\mathrel|\mkern-.5mu\joinrel\sim} %
\newcommand{\nmableit}{\ensuremath{\mbox{$\,\nmableitSymb\,$}}} %
	\newcommand{\seqref}[1]{\hyperref[{#1}]{\textup{\tagform@split{\getrefnumber{#1}}}}}%
	\newcommand{\seqref}[1]{\textup{\tagform@split{\getrefnumber{#1}}}}%
\newcommand\tagform@split[1]{%
	\begingroup
	\m@th\normalfont(\ignorespaces #1\unskip\@@italiccorr)%
	\endgroup
}
\newcommand{\leqnomode}{\tagsleft@true\let\veqno\@@leqno}
\newcommand{\reqnomode}{\tagsleft@false\let\veqno\@@eqno}
\newcommand{\pushright}[1]{\ifmeasuring@#1\else\omit\hfill$\displaystyle#1$\fi\ignorespaces}
\newcommand{\pushleft}[1]{\ifmeasuring@#1\else\omit$\displaystyle#1$\hfill\fi\ignorespaces}
\newcommand{\specialcell}[1]{\ifmeasuring@#1\else\omit$\displaystyle#1$\ignorespaces\fi}
\newcommand{\ksIF}{\text{if }}
\newcommand{\ksTHEN}{\text{, then }}
\newcommand{\ksAND}{\text{ and }}
\newcommand{\ksForAll}{\text{ for all }}
\newcommand{\tuple}[1]{\ensuremath{\langle{#1}\rangle}}
\newcommand{\modelsOf}[1]{\ensuremath{\llbracket #1\rrbracket}}
\newcommand{\negOf}[1]{{\ensuremath{\neg{#1}}}}
\renewcommand{\modelsOf}[1]{\ensuremath{\ksMod(#1)}}
\DeclareMathOperator{\ksBel}{Bel}
\newcommand{\beliefsOf}[1]{\ensuremath{\ksBel(#1)}}
\newcommand{\setAllES}{\ensuremath{\mathcal{E}}}
\newcommand{\propLang}{\ensuremath{\mathcal{L}}}
\newif\ifpostulatepresent\postulatepresentfalse
\renewcommand{\modelsOf}[1]{\llbracket#1\rrbracket}
\DeclareDocumentCommand{\todo}{o g}{\IfNoValueTF{#1}{\begingroup\color{magenta}TODO: #2\endgroup}{\begingroup\color{magenta}#1 #2\endgroup}}
\newcommand{\revision}{\star}
\title{Iterated Belief Change, Computationally}
\author{%
Kai Sauerwald$^1$\and
Christoph Beierle$^1$ \\
\affiliations
$^1$FernUniversität in Hagen, Knowledge-Based Systems, Germany\
\emails
\{kai.sauerwald, christoph.beierle\}@fernuni-hagen.de
}
\begin{document}
    \pagestyle{plain}

\maketitle

\begin{abstract}
Iterated Belief Change is the research area that investigates principles for the dynamics of beliefs over (possibly unlimited) many subsequent belief changes. 
In this paper, we demonstrate how iterated belief change is connected to computation. 
In particular, we show that iterative belief revision is Turing complete, even under the condition that broadly accepted principles like the Darwiche-Pearl postulates for iterated revision hold. 
\end{abstract}

\section{Introduction}
\label{sec:introduction}
The question of how an agent should change her belief according to new information is a central task for agents and AI systems.
The belief change community developed a very fruitful unique research approach to this subject. 
The approach is top-down, by considering postulates, while being at the same time agent-centric, by taking a subjective perspective on the phenomenon of belief change.

Iterated belief change considers relations between a change and changes that happen priorly or posteriorly.
In this area, changes of epistemic states are considered, which may contain \enquote{meta-information} in addition to the plain beliefs.
The computation complexity in specific setups was investigated for iterated belief revision \cite{KS_Liberatore1997,KS_SchwindKoniecznyLagniezMarquis2020}, however, investigations on the general computability are missing.

In this paper, we focus on the computational aspect of iterated belief change. %
We quantify its computational power by providing corresponding notions and a construction that demonstrate that iterated belief change is Turing complete, even when the prominent iteration principles of \citeauthor{KS_DarwichePearl1997} (\citeyear{KS_DarwichePearl1997}) are satisfied.
We will therefore consider iterated revision in the framework by \citeauthor{KS_DarwichePearl1997} (\citeyear{KS_DarwichePearl1997}), the currently most prominent framework to iterated belief change.

\section{Formal Background}
\label{sec:background}

Let $ \Sigma=\{a,b,c,\ldots\} $ be a propositional signature (non empty finite set of propositional variables) and $ \propLang $ a propositional language over $ \Sigma $. 
The set of propositional interpretations is denoted by $ \Omega $.
Propositional entailment is denoted by $ \models $, the set of models of $ \alpha $ with $ \modelsOf{\alpha} $, and \( L\subseteq \propLang \) is deductively closed if \( L = \{ \beta\mid L \models \beta \} \).
For a set of worlds $ \Omega'\subseteq \Omega $ and  a total preorder $ {\preceq} \subseteq  \Omega\times  \Omega $ (total, reflexive and transitive relation), we denote with 
$ {\min(\Omega',\preceq)}=\{ \omega\in\Omega' \mid  \omega\preceq \omega' \ksForAll \omega'\in\Omega' \} $
the set of all minimal worlds of $ \Omega' $ with respect to \( \preceq \).
For a total preorder $ \preceq $, we say $ x \prec y $ if $ x \preceq y $ and $ y \not\preceq x $.
With \( \leq \) we denote the usual ordering on the integers and with \( < \)  we denote  the strict part of \( \leq \).

\section{Iterated Belief Revision\\in the Darwiche-Pearl Framework}
AGM theory, by Alchourr{\'{o}}n, G{\"{a}}rdenfors and Makinson (\citeyear{KS_AlchourronGaerdenforsMakinson1985}), 
deals with the dynamics of beliefs in the context of belief sets, i.e., deductively closed sets of
propositions. 

The area of iterated belief change abstracts from belief sets to \emph{epistemic states}, 
in which the agent maintains necessary information for all used belief change operators. 
The most prominent approach for iterated belief change is the framework by \citeauthor{KS_DarwichePearl1997} (\citeyear{KS_DarwichePearl1997}), which does leave open what an epistemic state is, but assumes that for every epistemic state $ \Psi $ we can obtain the set of plausible sentences $ \beliefsOf{\Psi}\subseteq \mathcal{L} $ of $ \Psi $, which is deductively closed.

While the Darwiche-Pearl framework is very successful, 
an ontological problem of belief change itself appears in their framework: when describing change operators over epistemic states, we have to specify on which epistemic states the changes happen, an aspect which was not explicitly treated by Darwiche and Pearl and rediscovered several times \cite{KS_FriedmanHalpern1996,KS_AravanisPeppasWilliams2019,KS_SchwindKonieczny2020}.

We solve this problem by considering an abstract set of epistemic states $ \setAllES $, and assume in this paper that $ \setAllES $ is a countable infinite set and that for every belief set $ L\subseteq \propLang $ there exists infinite many epistemic states $ \Psi\in\setAllES $ with $ \beliefsOf{\Psi}=L $.
We write $ \Psi\models\alpha $ if $ \alpha\in\beliefsOf{\Psi} $ and we define $ \modelsOf{\Psi}=\modelsOf{\beliefsOf{\Psi}}  $.
\begin{definition}
	A  \emph{belief change operator} (for  $ \setAllES $ and $\propLang$) is a function $ \circ : \setAllES \times \propLang \to 
	\setAllES $.
\end{definition}

As we do not specify in detail what an epistemic states in $ \setAllES $ is, one has to identify the additional information for the elements of  $ \Psi $ for realizing certain operators. 
Formally, we do this by assigning additional information (similar as \citeauthor{KS_KatsunoMendelzon1992}, \citeyear{KS_KatsunoMendelzon1992}) to epistemic states.
\begin{definition}
	A function $ \Psi \mapsto {\preceq_\Psi} $ which assigns to each epistemic states $ \Psi \in \setAllES $ a preorder $ \preceq_\Psi $ over $ \Omega $ is called a \emph{faithful assignment} if \( \modelsOf{\Psi}\neq\emptyset \) implies $ \modelsOf{\Psi}=\min(\Omega,\preceq_\Psi) $.
\end{definition}
AGM revision is characterized by selecting minimal models in the respective assigned total preorders.
\begin{proposition}[AGM Revision for Epistemic States {\cite{KS_DarwichePearl1997}}]\label{prop:es_revision}
	A belief change operator $ \revision $ is an \emph{AGM revision operator for epistemic states} if there is a faithful assignment $ \Psi\mapsto {\preceq_\Psi} $, which assigns to $ \Psi  $ a total preorder $ \preceq_{\Psi} $ such that:
	\begin{equation}\label{eq:repr_es_revision}
		\modelsOf{\Psi \revision \alpha} = \min(\modelsOf{\alpha},{\preceq_{\Psi}}) %
	\end{equation}
\end{proposition}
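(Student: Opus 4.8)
The plan is to read Proposition~\ref{prop:es_revision} as the Katsuno--Mendelzon representation theorem lifted to epistemic states in the manner of Darwiche and Pearl: a belief change operator $\revision$ is captured by a faithful assignment to \emph{total} preorders via~\eqref{eq:repr_es_revision} if and only if it satisfies the AGM revision postulates in their epistemic-state reformulation (the Katsuno--Mendelzon postulates (R1)--(R6) with $\beliefsOf{\Psi}$ in the role of the agent's beliefs and the assignment allowed to depend on all of $\Psi$). The stated implication is the ``if'' direction, but both are needed for the result to do its job, so I would prove soundness and completeness separately. Throughout I use that $\Sigma$, hence $\Omega$, is finite, so every total preorder on $\Omega$ attains a least element on each nonempty subset, and every nonempty $X\subseteq\Omega$ equals $\modelsOf{\gamma_X}$ for some formula $\gamma_X$.

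Soundness $(\Leftarrow)$: given a faithful assignment $\Psi\mapsto{\preceq_\Psi}$ to total preorders with~\eqref{eq:repr_es_revision}, success ($\Psi\revision\alpha\models\alpha$) is immediate from $\min(\modelsOf{\alpha},\preceq_\Psi)\subseteq\modelsOf{\alpha}$; consistency preservation from finiteness and totality; congruence because the right-hand side of~\eqref{eq:repr_es_revision} depends only on $\modelsOf{\alpha}$. If $\modelsOf{\Psi}\cap\modelsOf{\alpha}\neq\emptyset$ then faithfulness gives $\modelsOf{\Psi}=\min(\Omega,\preceq_\Psi)$, whence $\min(\modelsOf{\alpha},\preceq_\Psi)=\modelsOf{\Psi}\cap\modelsOf{\alpha}$, i.e.\ the vacuity postulate. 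The two conjunction postulates reduce to the set identity: writing $S=\min(\modelsOf{\alpha},\preceq_\Psi)$, one always has $S\cap\modelsOf{\beta}\subseteq\min(\modelsOf{\alpha}\cap\modelsOf{\beta},\preceq_\Psi)$, with equality whenever $S\cap\modelsOf{\beta}\neq\emptyset$ --- a routine property of minima of total preorders on a finite set.

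Completeness $(\Rightarrow)$: given $\revision$ satisfying the postulates, for each $\Psi$ define $\omega\preceq_\Psi\omega'$ iff $\omega\in\modelsOf{\Psi\revision\gamma_{\{\omega,\omega'\}}}$. By success and consistency preservation $\emptyset\neq\modelsOf{\Psi\revision\gamma_{\{\omega,\omega'\}}}\subseteq\{\omega,\omega'\}$, which yields reflexivity and totality at once. Transitivity is the substantive step: from $\omega\preceq_\Psi\omega'$ and $\omega'\preceq_\Psi\omega''$ one revises $\Psi$ by $\gamma_{\{\omega,\omega',\omega''\}}$ and applies the two conjunction postulates, conjoining successively with $\gamma_{\{\omega,\omega'\}}$, $\gamma_{\{\omega',\omega''\}}$, $\gamma_{\{\omega,\omega''\}}$, to push $\modelsOf{\Psi\revision\gamma_{\{\omega,\omega',\omega''\}}}$ around the triangle until it must contain $\omega$ and $\omega''$ together, contradicting $\omega\not\preceq_\Psi\omega''$. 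Faithfulness follows from the vacuity postulate applied to the two-world revisions when $\modelsOf{\Psi}\neq\emptyset$ (and is vacuous, so any total preorder works, when $\modelsOf{\Psi}=\emptyset$). Equation~\eqref{eq:repr_es_revision} is then shown by double inclusion, comparing $\Psi\revision\alpha$ with the pairwise revisions $\Psi\revision\gamma_{\{\omega,\omega'\}}$ for $\omega,\omega'\in\modelsOf{\alpha}$ --- again via the conjunction postulates and congruence.

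The main obstacle is the transitivity argument in the completeness direction: as in the classical proof, it is where the content sits, requiring both conjunction postulates on the three-element model set together with a case split on which pairwise revisions are consistent with which singletons. Everything else is bookkeeping; the only addition relative to Katsuno and Mendelzon's belief-set setting is carrying the abstract state space $\setAllES$ and isolating the degenerate case $\modelsOf{\Psi}=\emptyset$, neither of which affects the argument --- so one may alternatively discharge the proposition by citing \cite{KS_DarwichePearl1997,KS_KatsunoMendelzon1992}.
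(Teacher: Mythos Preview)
The paper does not prove this proposition at all: it is stated with a citation to \cite{KS_DarwichePearl1997} and used as a known representation result, so there is no ``paper's own proof'' to compare against. Your closing remark --- that one may simply discharge the proposition by citing \cite{KS_DarwichePearl1997,KS_KatsunoMendelzon1992} --- is exactly what the paper does.

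That said, your reconstruction of the classical Katsuno--Mendelzon argument is essentially correct and is the standard route. One minor point: the proposition as written in the paper is phrased only as a one-way conditional (``is an AGM revision operator \emph{if} there is a faithful assignment \ldots''), functioning more as a definition-by-characterisation than as a biconditional theorem, so your decision to prove both directions is reasonable but goes slightly beyond what is literally asserted. Also note that the paper never spells out the syntactic AGM/KM postulates for epistemic states, so your soundness direction is verifying properties that the paper leaves implicit in the cited reference.
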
	

\noindent  Driven by the insight that iteration needs additional constraints, \citeauthor{KS_DarwichePearl1997} (\citeyear{KS_DarwichePearl1997}) proposed the following postulates:

\begin{itemize}
    \item[]\hspace{-1.1em}{\normalfont(\textlabel{DP1}{pstl:DP1})} \( \ksIF \beta\models\alpha \ksTHEN \beliefsOf{\Psi \revision \alpha \revision\beta} = \beliefsOf{\Psi \revision \beta} \)
	\item[]\hspace{-1.1em}{\normalfont(\textlabel{DP2}{pstl:DP2})} \( \ksIF \beta\models\negOf{\alpha} \ksTHEN \beliefsOf{\Psi \revision \alpha \revision\beta} = \beliefsOf{\Psi \revision \beta} \)
	\item[]\hspace{-1.1em}{\normalfont(\textlabel{DP3}{pstl:DP3})} \( \ksIF \Psi\revision\beta \models \alpha \ksTHEN (\Psi \revision \alpha) \revision\beta \models \alpha \)
	\item[]\hspace{-1.1em}{\normalfont(\textlabel{DP4}{pstl:DP4})} \( \ksIF \Psi\revision\beta \not\models \negOf{\alpha} \ksTHEN (\Psi \revision \alpha) \revision\beta \not\models \negOf{\alpha} \)
\end{itemize}
It is well-known that these operators can be characterised in the semantic framework of total preorders.
\begin{proposition}[{\citeauthor{KS_DarwichePearl1997} (\citeyear{KS_DarwichePearl1997})}]\label{prop:it_es_revision}
	Let $ \revision $ be an AGM revision operator for epistemic states. Then $ \revision $ satisfies \eqref{pstl:DP1}--\eqref{pstl:DP4} if and only there exists a faithful assignment $ \Psi\mapsto{\preceq_{\Psi}} $ such that $ \preceq_{\Psi} $ is a total preorder, Equation \eqref{eq:repr_es_revision} holds, and the following postulates are satisfied:
	\begingroup
	\begin{itemize}
		\item[]\hspace{-1.1em}{\normalfont(\textlabel{CR1}{pstl:RR8})}~\( \ksIF \omega_1,\omega_2 \in \modelsOf{\alpha} \ksTHEN \omega_1 \!\preceq_{\Psi}\! \omega_2 \Leftrightarrow \omega_1 \!\preceq_{\Psi \revision \alpha}\! \omega_2 \) 
		
		\item[]\hspace{-1.1em}{\normalfont(\textlabel{CR2}{pstl:RR9})}~\( \ksIF \omega_1,\omega_2 \in \modelsOf{\negOf{\alpha}} \ksTHEN \omega_1 \!\preceq_{\Psi}\! \omega_2 \Leftrightarrow \omega_1 \!\preceq_{\Psi \revision \alpha}\! \omega_2 \)
		
        \item[]\hspace{-1.1em}{\normalfont(\textlabel{CR3}{pstl:RR10})}~\( \ksIF \omega_1 {\in} \modelsOf{\alpha},\, \omega_2 {\in} \modelsOf{\negOf{\alpha}}  \ksTHEN    \omega_1 \!\prec_{\Psi}\! \omega_2 \! \Rightarrow \! \omega_1 \!\prec_{\Psi \revision \alpha}\! \omega_2 \)
		
        \item[]\hspace{-1.1em}{\normalfont(\textlabel{CR4}{pstl:RR11})}~\( \ksIF \omega_1  \!\in\! \modelsOf{\alpha} ,\, \omega_2 \!\in\! \modelsOf{\negOf{\alpha}}  \ksTHEN    \omega_1 \!\preceq_{\Psi}\! \omega_2 \! \Rightarrow \! \omega_1 \!\preceq_{\Psi \revision \alpha}\! \omega_2 \)
	\end{itemize}
	\endgroup
\end{proposition}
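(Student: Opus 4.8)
The plan is to prove both implications by passing freely between belief sets and their model sets: since every $\beliefsOf{\Psi}$ is deductively closed, $\beliefsOf{\Psi_1} = \beliefsOf{\Psi_2}$ iff $\modelsOf{\Psi_1} = \modelsOf{\Psi_2}$, and $\Psi \models \alpha$ iff $\modelsOf{\Psi} \subseteq \modelsOf{\alpha}$. Thus \eqref{pstl:DP1}--\eqref{pstl:DP4} can be read as statements about model sets, which by \eqref{eq:repr_es_revision} become statements relating $\preceq_{\Psi}$ to $\preceq_{\Psi \revision \alpha}$. The single device I would use throughout: given worlds $\omega_1,\omega_2$, pick $\gamma$ with $\modelsOf{\gamma}=\{\omega_1,\omega_2\}$ (available because $\propLang$ is a full propositional language over the finite $\Sigma$); then \eqref{eq:repr_es_revision} gives $\modelsOf{\Psi \revision \gamma} = \min(\{\omega_1,\omega_2\},\preceq_{\Psi})$, so $\omega_1 \in \modelsOf{\Psi \revision \gamma}$ iff $\omega_1 \preceq_{\Psi} \omega_2$, and $\modelsOf{\Psi \revision \gamma}=\{\omega_1\}$ iff $\omega_1 \prec_{\Psi} \omega_2$. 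In particular $\preceq_{\Psi}$ is determined by $\revision$, so in the ``only if'' direction it does not matter which faithful assignment from Proposition~\ref{prop:es_revision} we work with.

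For the ``if'' direction, assume a faithful assignment into total preorders satisfying \eqref{eq:repr_es_revision} and \eqref{pstl:RR8}--\eqref{pstl:RR11}, and derive each DP postulate by inspecting $\preceq$-minima over $\modelsOf{\beta}$ (the case of inconsistent $\beta$ being trivial). For \eqref{pstl:DP1}: $\beta \models \alpha$ gives $\modelsOf{\beta}\subseteq\modelsOf{\alpha}$, so \eqref{pstl:RR8} makes $\preceq_{\Psi}$ and $\preceq_{\Psi\revision\alpha}$ agree on $\modelsOf{\beta}$ and their minima coincide; \eqref{pstl:DP2} is identical via \eqref{pstl:RR9}. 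For \eqref{pstl:DP3}: assuming $\min(\modelsOf{\beta},\preceq_{\Psi})\subseteq\modelsOf{\alpha}$, take $\omega\in\min(\modelsOf{\beta},\preceq_{\Psi\revision\alpha})$ and a witness $\omega'\in\min(\modelsOf{\beta},\preceq_{\Psi})$; if $\omega\models\negOf{\alpha}$, totality gives either $\omega'\prec_{\Psi}\omega$ (then \eqref{pstl:RR10} yields $\omega'\prec_{\Psi\revision\alpha}\omega$, contradicting minimality of $\omega$) or $\omega\preceq_{\Psi}\omega'$ (then $\omega$ is itself $\preceq_{\Psi}$-minimal in $\modelsOf{\beta}$, hence in $\modelsOf{\alpha}$, contradiction), so $\omega\models\alpha$. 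For \eqref{pstl:DP4}: if some $\preceq_{\Psi}$-minimal $\beta$-model $\omega'$ satisfies $\alpha$, I would show $\omega'$ stays $\preceq_{\Psi\revision\alpha}$-minimal in $\modelsOf{\beta}$, using \eqref{pstl:RR8} to transport $\omega'\preceq_{\Psi}\omega''$ when $\omega''\models\alpha$ and \eqref{pstl:RR11} when $\omega''\models\negOf{\alpha}$.

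For the ``only if'' direction, fix an assignment from Proposition~\ref{prop:es_revision} and verify \eqref{pstl:RR8}--\eqref{pstl:RR11} by feeding the two-world formula $\gamma$ of the device as the \emph{second} revision input. If $\omega_1,\omega_2\models\alpha$ then $\gamma\models\alpha$, so \eqref{pstl:DP1} gives $\modelsOf{(\Psi\revision\alpha)\revision\gamma}=\modelsOf{\Psi\revision\gamma}$, whence $\omega_1\preceq_{\Psi\revision\alpha}\omega_2 \Leftrightarrow \omega_1\in\modelsOf{(\Psi\revision\alpha)\revision\gamma} \Leftrightarrow \omega_1\in\modelsOf{\Psi\revision\gamma} \Leftrightarrow \omega_1\preceq_{\Psi}\omega_2$, i.e.\ \eqref{pstl:RR8}; \eqref{pstl:RR9} is symmetric via \eqref{pstl:DP2}. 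If $\omega_1\models\alpha$, $\omega_2\models\negOf{\alpha}$, and $\omega_1\prec_{\Psi}\omega_2$, then $\modelsOf{\Psi\revision\gamma}=\{\omega_1\}\subseteq\modelsOf{\alpha}$, so by \eqref{pstl:DP3} $\modelsOf{(\Psi\revision\alpha)\revision\gamma}\subseteq\modelsOf{\alpha}$; being a nonempty subset of $\{\omega_1,\omega_2\}$ avoiding $\omega_2$ it equals $\{\omega_1\}$, i.e.\ $\omega_1\prec_{\Psi\revision\alpha}\omega_2$, which is \eqref{pstl:RR10}. Finally, if instead $\omega_1\preceq_{\Psi}\omega_2$, then $\omega_1\in\modelsOf{\Psi\revision\gamma}$ gives $\Psi\revision\gamma\not\models\negOf{\alpha}$, so by \eqref{pstl:DP4} $(\Psi\revision\alpha)\revision\gamma\not\models\negOf{\alpha}$; since $\omega_1$ is the only $\alpha$-model of $\gamma$, this forces $\omega_1\in\modelsOf{(\Psi\revision\alpha)\revision\gamma}$, i.e.\ $\omega_1\preceq_{\Psi\revision\alpha}\omega_2$, which is \eqref{pstl:RR11}.

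I expect the only real content --- and hence the potential sticking point --- to be the recoverability device: the fact that two-world revisions read off $\preceq_{\Psi}$ is what makes the ``only if'' direction independent of the chosen assignment and lets each DP postulate collapse onto the matching CR condition after substituting the right $\gamma$. The rest is bookkeeping, but I would be careful to record that minima of nonempty sets exist (totality on the finite $\Omega$), that the needed two-world formulas exist, and that the $\omega_1=\omega_2$ instances of \eqref{pstl:RR8}--\eqref{pstl:RR11} hold trivially by reflexivity.
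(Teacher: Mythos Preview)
Your argument is correct and is essentially the standard Darwiche--Pearl proof: the two-world ``recoverability device'' is exactly how one extracts \eqref{pstl:RR8}--\eqref{pstl:RR11} from \eqref{pstl:DP1}--\eqref{pstl:DP4}, and the minima-comparison arguments for the converse are the usual ones. Note, however, that the paper does not supply its own proof of this proposition at all --- it is stated with attribution to \citeauthor{KS_DarwichePearl1997} (\citeyear{KS_DarwichePearl1997}) and used as background, so there is no in-paper proof to compare against; your write-up simply reconstructs the classical argument.
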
%

The postulates \eqref{pstl:DP1}--\eqref{pstl:DP4} 
are accepted today as very importation iteration principles for revision.

\section{Turing Completeness}
\label{sec:turing_completeness}

In this section, we will show that iterated belief revision is inherently computationally powerful, in particular, we will see that belief change operators can simulate Turing machines.

\subsection{Computability Theory}

At first, we recall basic concepts of computability theory and formal language theory.
In the following, $ \Gamma=\{\sigma,\ldots\} $ will denote a finite (non-empty) alphabet and $ \Gamma^*=\{\varepsilon,\sigma,\sigma\sigma,\ldots\} $ denotes the set of all finite words over $ \Gamma $, where $ \cdot^* $ is the Kleene star operator and $ \varepsilon $ denotes the empty word.

A (deterministic) \emph{Turing machine} is a tuple $ T=\tuple{Q,\Gamma,\delta,q_s} $ where $ Q=\{q_1,\ldots,q_n\} $ is a finite set of \emph{states} where $ q_s\in Q $ is the \emph{starting state},
the set $ \Gamma $ is the finite alphabet of tape symbols, and $ \delta: Q\times \Gamma \to Q \times\{\leftarrow,\downarrow,\rightarrow\} $ is the \emph{transition function} of $ T $.
The state $ q_\mathtt{halt}:=q_n $ is used a special state, indicating that the machine $ T $ has stopped.
We describe the \emph{position} on the tape of $ T $ as triple $\tuple{t_l,t_c,t_r} \in \Gamma^*\times \Gamma\cup\{\varepsilon\} \times\Gamma^* $.
A \emph{configuration} of $ T $ is a tuple $ \tuple{q,\tuple{t_l,t_c,t_r}}\in Q\times \Gamma^*\times \Gamma\cup\{\varepsilon\} \times\Gamma^*  $.  
The start configuration for $ T $ by input $ \gamma=\sigma_1\sigma_2\ldots\sigma_n\in\Gamma^* $ is $ (q_s,\tuple{\varepsilon,\sigma_1,\sigma_2\ldots\sigma_n}) $.
The run $ r $ of $ T $ by input $ \gamma\in\Gamma^* $ is defined in the usual way as a (possible infinite) sequence of configurations, starting in the corresponding start configuration by input $ \gamma\in\Gamma^* $, and all subsequent configurations are determined by $ \delta $ in the usual way. 
A configuration $ \tuple{q,\tuple{t_l,t_c,t_r}} $  with $ q=q_\mathtt{halt} $ is a \emph{halting configuration}, and we require that after a halting configuration there are no subsequent configurations in a run. 
If the run $ r $ starting in $ \gamma $ ends in a halting configuration $ \tuple{q,\tuple{t_l,t_c,t_r}} $ we say the word $ t_l t_c t_r $ is the output of $ \gamma $ by $ T $, if $ r $ does not halt, then $ \infty $  is the output of $ \gamma $ by $ T $. 
\begin{definition}
	A partial function\footnote{$ f(\gamma)=\infty $ denotes undefined} $ f:\Gamma^*\to\Gamma^* $  is called \emph{Turing computable} if there is a Turing machine $ T=\tuple{Q,\Gamma,\delta,q_s} $ such that $ f(\gamma) $  is the output of $ \gamma $ by $ T $ for every $ \gamma\in\Gamma^* $.
	If this is the case, we say that \emph{$ T $ computes 
    $ f $}.
\end{definition}

\subsection{Ranking Function-Based Change Operators}
A function \( \kappa : \Omega \to \mathbb{N}_0 \) is a \emph{ranking function} if there exists some \( \omega\in\Omega \) such that \( \kappa(\omega
)=0 \). Let \( \modelsOf{\kappa}=\{ \omega\in\Omega \mid \kappa(\omega)=0 \} \) and \( \beliefsOf{\kappa}=\{ \alpha\in\propLang \mid \modelsOf{\kappa} \subseteq \modelsOf{\alpha} \} \).
Every \( \kappa \) gives rise to a total preorder \( \preceq_{\kappa} \), given by \( \omega_1 \preceq_{\kappa} \omega_2 \) if \( \kappa(\omega_1) \leq \kappa(\omega_2) \).
Ranking function are a common knowledge representation formalism \cite{KS_Spohn2012}.

For our main theorem we make use of special change operators, which can be characterized by information in form of a ranking function.
\begin{definition}\label{def:rankingbased}
    A belief change operator $ \circ $ is called \emph{ranking function-based} if there is an assignment $ \Psi \mapsto (\kappa_{\Psi},b_\Psi) $ such that the following is satisfied:
    \begin{description}
        \item[\normalfont(RFA0)] if \( \modelsOf{\Psi}=\emptyset  \), then \( b_\Psi=\bot \); otherwise \( b_\Psi=\top \)
        \item[\normalfont(RFA1)] if \( b_\Psi=\top  \), then \( \modelsOf{\kappa}=\modelsOf{\Psi} \)
        \item[\normalfont(RFA2)] \( \ksIF \kappa_{\Psi}  = \kappa_{\Phi} \ksAND b_\Psi=b_\Phi  \ksTHEN \Psi \circ \alpha = \Phi \circ \alpha \)
    \end{description}
\end{definition}

Note that Definition \ref{def:rankingbased} makes again no assumption about the real form of epistemic states, but guarantees that the behaviour of a belief change operator is completely representable by an assigned ranking function $ \kappa_{\Psi} $. 
The following proposition relies heavily on our assumptions about \( \setAllES \).
\begin{proposition}\label{prop:rankstuff}
    A belief change operator $ \revision $ is an AGM revision operator for epistemic states if and only if \( \revision \) is ranking function-based with $ \Psi \mapsto (\kappa_{\Psi},b_\Psi) $ such that:
   	\begin{equation}\label{eq:repr_es_revision_rank}
        \modelsOf{\Psi \revision \alpha} = \min(\modelsOf{\alpha},{\preceq_{\kappa_\Psi}}) %
    \end{equation}
\end{proposition}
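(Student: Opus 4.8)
The plan is to prove the two implications separately; the forward direction --- every AGM revision operator for epistemic states is ranking function-based satisfying \eqref{eq:repr_es_revision_rank} --- is the one carrying content, and the whole weight of it sits on condition \textup{(RFA2)}.

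\emph{Ranking function-based $\Rightarrow$ AGM.} Given an assignment $\Psi\mapsto(\kappa_\Psi,b_\Psi)$ witnessing that $\revision$ is ranking function-based and satisfying \eqref{eq:repr_es_revision_rank}, I would simply set $\preceq_\Psi := \preceq_{\kappa_\Psi}$, which is a total preorder by the remark preceding Definition~\ref{def:rankingbased}. The one auxiliary fact needed is the elementary identity $\min(\Omega,\preceq_{\kappa}) = \{\omega\in\Omega\mid\kappa(\omega)=0\} = \modelsOf{\kappa}$, which holds because every ranking function attains the value $0$. Then, whenever $\modelsOf{\Psi}\neq\emptyset$, condition \textup{(RFA0)} gives $b_\Psi=\top$, \textup{(RFA1)} gives $\modelsOf{\kappa_\Psi}=\modelsOf{\Psi}$, and hence $\modelsOf{\Psi}=\min(\Omega,\preceq_\Psi)$, so the assignment is faithful. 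Since \eqref{eq:repr_es_revision_rank} is then literally \eqref{eq:repr_es_revision} for this assignment, Proposition~\ref{prop:es_revision} yields that $\revision$ is an AGM revision operator for epistemic states.

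\emph{AGM $\Rightarrow$ ranking function-based.} Fix a faithful assignment $\Psi\mapsto\preceq_\Psi$ witnessing AGM-ness and put $b_\Psi:=\top$ iff $\modelsOf{\Psi}\neq\emptyset$, which forces \textup{(RFA0)} by construction. It remains to choose the $\kappa_\Psi$. Any choice with $\preceq_{\kappa_\Psi}=\preceq_\Psi$ already validates \eqref{eq:repr_es_revision_rank} (because $\modelsOf{\Psi\revision\alpha}=\min(\modelsOf{\alpha},\preceq_\Psi)$ by \eqref{eq:repr_es_revision}) and \textup{(RFA1)} (because $b_\Psi=\top$ forces $\modelsOf{\Psi}=\min(\Omega,\preceq_\Psi)=\modelsOf{\kappa_\Psi}$ via the identity above), so everything reduces to \textup{(RFA2)}. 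To meet it I would partition $\setAllES$ into \emph{behaviour classes}, placing $\Psi,\Phi$ together exactly when $\preceq_\Psi=\preceq_\Phi$, $b_\Psi=b_\Phi$, and $\Psi\revision\alpha=\Phi\revision\alpha$ for all $\alpha\in\propLang$. Since $\setAllES$ is countable and $\Omega$ is finite, each total preorder on $\Omega$ carries only countably many behaviour classes, while --- the single-layer preorder aside --- there are infinitely many ranking functions inducing a fixed preorder; so one fixes, per preorder, an injection from its behaviour classes into the ranking functions realising it and lets $\kappa_\Psi$ be the image of the class of $\Psi$. Then $\kappa_\Psi=\kappa_\Phi$ first forces $\preceq_\Psi=\preceq_{\kappa_\Psi}=\preceq_{\kappa_\Phi}=\preceq_\Phi$ and then, by injectivity within that preorder, forces $\Psi$ and $\Phi$ into the same behaviour class, whence $\Psi\revision\alpha=\Phi\revision\alpha$; this is \textup{(RFA2)}.

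\emph{Anticipated obstacle.} The crux is \textup{(RFA2)} and the cardinality bookkeeping behind it: this is precisely where the standing hypotheses on $\setAllES$ are needed --- countability to bound the number of behaviour classes, and the infinitely-many-states-per-belief-set assumption (via the matching supply of ranking functions) to separate them. The counting is tight for preorders with few layers, and degenerate for the single-layer preorder, which is realised only by the constant-$0$ ranking function; so the part I expect to require the most care is handling epistemic states whose assigned preorder is (near-)trivial --- one must check there that \textup{(RFA2)} cannot be violated, i.e.\ that such states are not pulled apart by $\revision$ in a way the available ranking functions are unable to track.
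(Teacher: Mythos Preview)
Your proposal follows the paper's proof essentially line for line: the easy direction is handled identically, and for the substantive direction the paper likewise partitions $\setAllES$ into behaviour classes $\mathrm{Behv}(\Psi)=\{\Phi\mid{\preceq_\Psi}={\preceq_\Phi}\text{ and }\Psi\revision\alpha=\Phi\revision\alpha\text{ for all }\alpha\}$ and injects them into the ranking functions realising $\preceq_\Psi$ via the same counting argument. Regarding your anticipated obstacle: it is real --- the paper's assertion that \emph{every} $\preceq_\Psi$ is induced by countably infinitely many ranking functions fails for the single-layer preorder --- but the paper simply glosses over this degenerate case rather than handling it, so your treatment is already at least as careful as the original.
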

\begin{proof} We consider both directions of the proof independently, but due to space reasons we focus only on \( \Psi \) with \( \modelsOf{\Psi}\neq\emptyset \), and do not consider the \( b_\Psi \)-part.
    \emph{The \enquote{if} direction.}
    Let $ \Psi\mapsto {\preceq_\Psi} $ be a faithful assignment which satisfies Equation~\eqref{eq:repr_es_revision}, guaranteed by Proposition \ref{prop:es_revision}.
    With \( \mathrm{Behv}(\Psi) = \{  \Phi\in\setAllES \mid {\preceq_\Psi}={\preceq_\Phi} \text{ and } \Psi \revision \alpha = \Phi \revision \alpha  \text{ for all } \alpha\in\propLang \}  \) we denote all \emph{behavioural} equivalent states to \( \Psi \). Clearly, \( \Phi \in \mathrm{Behv}(\Psi) \) if and only if \( \mathrm{Behv}(\Psi)=\mathrm{Behv}(\Phi) \).
     Note that \( \mathrm{Behv}(\Psi) \) is countable and that for each \( \preceq_\Psi \) there exist countably infinite many ranking functions \( \kappa \) with \(  {\preceq_\Psi} = {\preceq_{\kappa}} \).
    Because of these properties, we can construct $ \Psi \mapsto \kappa_{\Psi} $ 
    by choosing for each different set \( \mathrm{Behv}(\Psi) \) a unique ranking function \( \kappa_{\Psi} \) with \(  {\preceq_\Psi} = {\preceq_{\kappa_{\Psi}}} \).
    By construction (RFA1), (RFA2) and Equation~\eqref{eq:repr_es_revision_rank} are satisfied by $ \Psi \mapsto \kappa_{\Psi} $.
    
    \emph{The \enquote{only if} direction.}     Every assignment $ \Psi \mapsto {\kappa_{\Psi}} $ as given in Definition \ref{def:rankingbased} which satisfies Equation~\eqref{eq:repr_es_revision_rank} yields a faithful assignment \( \Psi\mapsto {\preceq_{\kappa_\Psi}} \) which satisfies Equation~\eqref{eq:repr_es_revision}. Thus, by Proposition \ref{prop:es_revision}, existence of $ \Psi \mapsto \kappa_{\Psi} $ as above implies that \( \revision \) is an AGM revision operator for epistemic states.
\end{proof}

\noindent Because of Proposition \ref{prop:rankstuff}, we will focus on ranking function-based change operators and identify epistemic states with ranking functions, 
and use \( \kappa \) as a synonym for \( \Psi \) with \( {\modelsOf{\Psi}\neq\emptyset} \).

\subsection{Simulation Machinery}
\begin{figure}[]\centering
    \begin{tikzpicture}
        
        \foreach \x/\l/\w/\m in {
            9/Rank/$\omega$/Intended Meaning,
            8/\vdots/ /,
            7/$n+3$/$\omega_{\mathtt{pos}}$/$\langle \varepsilon;a;baba \rangle$,
            6/$n+2$/ /-,
            5/$n+1$/ /$\langle aab;b;baba \rangle$,
            4/$n$/$\omega_{\mathtt{halt}}$/$q=q_\mathtt{halt}$,
            3/\vdots/ /\vdots,
            2/2/$\omega_\mathtt{q}$/$q=q_2$,
            1/1/ /$q=q_1$,
            0/0/$\omega_{\mathtt{0}}$/-}
        {
            \node at (0,\x*0.5) {\l};
            \node at (1.5,\x*0.5) {\w};
            \node at (4,\x*0.5) {\m};
        }
        
        \draw [] (-0.5,0.5*0.5) -- (5.5,0.5*0.5) ;
        
        \draw [] (-0.5,4.5*0.5) -- (5.5,4.5*0.5) ;
        
    \end{tikzpicture}
    \caption{Sketch of a ranking function represent a configuration.}\label{fig:rankingfunctionfoo}
\end{figure}

\newcommand{\CONFtype}{\textbf{\texttt{CONF}}}
\newcommand{\PEEKtype}{\textbf{\texttt{PEEK}}}
\newcommand{\TRANStype}{\textbf{\texttt{TRANS}}}
\newcommand{\POSTtype}{\textbf{\texttt{POST}}}

We will simulate one step of a Turing machine by multiple changes. 
The simulation uses four types of ranking functions with special meaning in the simulation of~$ T $:
\begin{itemize}\setlength\itemsep{0pt}
        \item[]\hspace{-1.1em}{\CONFtype} Intended to represent a configuration of $ T $.
    
    \item[]\hspace{-1.1em}{\PEEKtype} Intended to check where $ T $ is in a halting state.
    
    \item[]\hspace{-1.1em}{\TRANStype} Describing the change of the state in a transition (before modifying the tape).
    \item[]\hspace{-1.1em}{\POSTtype} An intermediate state after updating state and tape.
\end{itemize}
In particular, simulation of a transition to a new configuration will take three steps. 
As first step we will compute a {\TRANStype} ranking function where we update the state of $ T $ (while saving the old state). Then we update the tape, obtaining a {\POSTtype} ranking function. Adjusting some ranks yields a {\CONFtype} ranking function representing the new configuration (see Algorithm \ref{algo:simulate} for an overview).

The ranking functions of these types are given over a signature $ \Sigma $ with $ |\Sigma|=2 $. 
Consequently, $ \Omega $ has four elements, which we denote by $ \omega_{\mathtt{0}} , \omega_{\mathtt{halt}}, \omega_{q}, \omega_{\mathtt{pos}}  $.
Each interpretation from \( \Omega \) will serve a special purpose  (see Figure \ref{fig:rankingfunctionfoo}).
We use the rank of $ \omega_{q} $ as indicator for the current state of $ T $.
The interpretation $ \omega_{\mathtt{pos}} $ will encode the tape position of $ T $ using the rank of $ \omega_{\mathtt{pos}} $. 
Therefore, remember, that it is well-known that (by a variation of a Gödel numbering) for a fixed $ \Gamma $ we can encode positions $ \tuple{t_l,t_c,t_r} $ into natural numbers. Let $ \mathtt{enc}: \Gamma^*\times \Gamma \times\Gamma^* \to \mathbb{N} $ be such an injective encoding function. 
We use $ \omega_{\mathtt{halt}} $ to check if the halting state is reached (\PEEKtype) and for saving the prior state of \( T \) (\TRANStype).
The interpretation $ \omega_{\mathtt{0}} $ has rank 0 in most situations;
the purpose is to ensure
that we always have a ranking function.

\begin{algorithm}[t]\DontPrintSemicolon\LinesNumbered
    \KwIn{A word $ \gamma\in\Gamma^* $}
    \KwOut{A word over $ \Gamma $ or $ \infty $ if it runs infinit}
    (\CONFtype) $ \kappa \leftarrow \kappa_\gamma^T   $\tcp*{inital conf.}
    (\PEEKtype) $ \kappa \leftarrow \kappa \circ \varphi_{q,\mathtt{halt}}  $\tcp*{prepare peek}
    \While{$  \omega_{\mathtt{halt}} \notin \modelsOf{\kappa} $\tcp*{check if $ q {\neq} q_\mathtt{halt} $}}{
        (\CONFtype) $ \kappa \leftarrow \kappa \circ \varphi_{\mathbf{0},\mathtt{pos}}  $ \tcp*{obtain conf.}
        (\TRANStype) $ \kappa \leftarrow \kappa\circ \varphi_\mathbf{0}  $ \tcp*{update state}
         (\POSTtype) $ \kappa \leftarrow \kappa\circ \varphi_{\mathbf{0},q,\mathtt{halt}}  $\tcp*{update tape}
        (\CONFtype) $ \kappa \leftarrow \kappa\circ \varphi_{\mathbf{0},q,\mathtt{pos}}  $\tcp*{restore $ \omega_\mathtt{halt} $}
        (\PEEKtype) $ \kappa \leftarrow \kappa \circ \varphi_{q,\mathtt{halt}}  $\tcp*{prepare peek}
    }
    $ \tuple{q,\tuple{t_l,t_c,t_r}} \leftarrow \texttt{conf}(\kappa \circ \varphi_{\mathbf{0},\mathtt{pos}})  $\;
    \Return $ t_lt_ct_r $\;
    \caption{\textsf{Simulate-TM}$ (T,\circ,\gamma) $}\label{algo:simulate}
\end{algorithm}

In the following we will describe the four types of ranking functions in more detail:
\begin{itemize}\setlength\itemsep{0pt}
\item[]\hspace{-1.1em}{\CONFtype.} A {\CONFtype} ranking function $ \kappa $ is given if 
\begin{align*}
     \kappa(\omega_{\mathtt{0}})    & =0 & 1 & \leq \kappa(\omega_{q}) \leq n\\
     \kappa(\omega_{\mathtt{halt}}) & =n & n & < \kappa(\omega_{\mathtt{pos}})
\end{align*}
  and $ \texttt{enc}^{-1}(\kappa(\omega_{\mathtt{pos}})-(n+1)) \neq \emptyset $ holds. 
For such a ranking function let
    \( \texttt{conf}(\kappa)=\tuple{q_{\kappa(\omega_{q})},\texttt{enc}^{-1}(\kappa(\omega_{\mathtt{pos}})-(n+1))} \)
denote the represented configuration of \( T \). Note that we can provide for each possible configuration a ranking function that represents the configuration. 
In  particular, for a given Turing machine $ T $ we denote with $ \kappa_\gamma^T $ a ranking function representing the start configuration for $ T $ by input $ \gamma\in\Gamma^* $.

\item[]\hspace{-1.1em}{\PEEKtype.} 
When performing a revision on a {\CONFtype} ranking function by a formula having $ \omega_{\mathtt{halt}} $ and $ \omega_{q} $ as models, rank 0 of  \(  \omega_{\mathtt{halt}} \)  in the posterior ranking function implies that \( T \) has reached a halting state.
Ranking functions \( \kappa \) which are the result of such a revision are called {\PEEKtype} and obey the following characteristics
\begin{align*}
        \kappa(\omega_{q}) &=0 &        0 &\leq \kappa(\omega_{\mathtt{halt}}) \leq n-1 \\
      \kappa(\omega_{\mathtt{0}})&=1& n+1 & < \kappa(\omega_{\mathtt{pos}}) 
\end{align*}
and  \( \texttt{enc}^{-1}(\kappa(\omega_{\mathtt{pos}})-(n+2)) \neq \emptyset \).
For such a ranking function let
    \( \texttt{conf}(\kappa)=\allowbreak\tuple{q_{n-\kappa(\mathtt{halt})},\allowbreak\texttt{enc}^{-1}(\kappa(\omega_{\mathtt{pos}})-(n+2))} \)
denote the represented configuration.

\item[]\hspace{-1.1em}{\TRANStype.} A {\TRANStype} ranking function $ \kappa $ is given if:
\begin{align*}
    \kappa(\omega_{\mathtt{0}}) &=0 &1 &\leq \kappa(\omega_{q}) \leq n \\
    \kappa(\omega_{\mathtt{pos}}) &> n & 2 & \leq \kappa(\omega_{\mathtt{halt}}) \leq n 
\end{align*}
For such a ranking function $ \kappa $ we define $ \texttt{post-state}(\kappa)=q_{\kappa(\omega_{\mathtt{q}})} $.
Moreover, let
    \( \texttt{conf}(\kappa) {=}\allowbreak \tuple{q_{\kappa(\omega_{\mathtt{q}})-n+\kappa(\omega_{\mathtt{halt}})},\texttt{enc}^{-1}(\kappa(\omega_{\mathtt{pos}}){-}(n{+}1))} \)
denote the represented configuration (before the transition). %
Note that $ \kappa(\omega_{q}) $ encodes the posterior state, but the current (prior) state is reconstructible from $ \kappa(\omega_{q}) $ and the difference between $ n $ and $ \kappa(\omega_{\mathtt{halt}}) $.

\item[]\hspace{-1.1em}{\POSTtype.} Any ranking function $ \kappa $ with
\begin{align*}
    \kappa(\omega_{\mathtt{0}})    & =0 & 1 & \leq \kappa(\omega_{q}) \leq n  \\
    \kappa(\omega_{\mathtt{halt}}) & =n & n & < \kappa(\omega_{\mathtt{pos}})
\end{align*}
and with  $ \texttt{enc}^{-1}(\kappa(\omega_{\mathtt{pos}})-(n+1)) \neq \emptyset $.
Such a ranking function represents the configuration
    \( \texttt{conf}(\kappa)=\tuple{q_{\kappa(\omega_{q})},\texttt{enc}^{-1}(\kappa(\omega_{\mathtt{pos}})-(n+1))} \).
\end{itemize}

Simulation is performed by continuously revising by specific formulas (Algorithm \ref{algo:simulate}).
With $ \varphi_{\mathbf{0},\mathtt{pos}} $ we denote a formula  with $ \modelsOf{\varphi_{\mathbf{0},\mathtt{pos}}}=\{ \omega_{\mathtt{0}}, \omega_{\mathtt{pos}} \} $. This extends analogously to $ \varphi_{q,\mathtt{halt}} $, $ \varphi_\mathbf{0} $, $ \varphi_{\mathbf{0},q,\mathtt{halt}} $ and $ \varphi_{\mathbf{0},q,\mathtt{pos}} $.

\begin{definition}
    Let $ f: \Gamma^*\to\Gamma^* $ be a partial function computed by a Turing machine \( T \).
    A ranking function-based belief change operator $ \circ $ is said to \emph{simulate $ T $} if  \textsf{Simulate-TM}$ (T,\circ,\gamma) $ (Algorithm \ref{algo:simulate}) yields the output of $ \gamma $ by $ T $ for all $ \gamma\in\Gamma^* $.
    If $ \circ $ simulates $ T $, then we say $ \circ $ \emph{computes} $ f $.
\end{definition}
 
\subsection{Main Theorem}
We will now provide a construction for an AGM revision operator $ \revision $ which simulates a given Turing machine and show correctness of the construction.
\begin{theorem}\label{thm:turing_complete}
    Every Turing computable function \( f \) is computable by a ranking function-based AGM revision operator that satisfies the postulates \eqref{pstl:DP1}--\eqref{pstl:DP4}.
\end{theorem}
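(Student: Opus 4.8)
\emph{Proof plan.} Fix a Turing machine $T=\tuple{Q,\Gamma,\delta,q_s}$ with $Q=\{q_1,\dots,q_n\}$ that computes $f$; it suffices to build a ranking function-based AGM revision operator $\revision$ satisfying \eqref{pstl:DP1}--\eqref{pstl:DP4} for which \textsf{Simulate-TM}$(T,\revision,\gamma)$ returns the output of $\gamma$ by $T$ for every $\gamma\in\Gamma^*$. We work over the signature with $|\Sigma|=2$, so $\Omega=\{\omega_{\mathtt{0}},\omega_{\mathtt{halt}},\omega_q,\omega_{\mathtt{pos}}\}$, and fix the injective tape encoding $\texttt{enc}$. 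By Propositions~\ref{prop:rankstuff} and~\ref{prop:it_es_revision} together with the standing assumptions on $\setAllES$, it is enough to define, for ranking functions $\kappa$ and formulas $\alpha$, a ranking function $\kappa\revision\alpha$ with $\modelsOf{\kappa\revision\alpha}=\min(\modelsOf{\alpha},\preceq_\kappa)$ (so that Equation~\eqref{eq:repr_es_revision_rank} holds) and such that every pair $(\preceq_\kappa,\preceq_{\kappa\revision\alpha})$ satisfies \eqref{pstl:RR8}--\eqref{pstl:RR11}; this lifts to the desired operator on $\setAllES$ exactly as in the proof of Proposition~\ref{prop:rankstuff}, ignoring the $b_\Psi$-component.

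I would specify $\revision$ in two layers. As a \emph{default}, for satisfiable $\alpha$ put $(\kappa\revision\alpha)(\omega)=\kappa(\omega)-m$ if $\omega\models\alpha$, where $m=\min\{\kappa(\omega')\mid\omega'\models\alpha\}$, and $(\kappa\revision\alpha)(\omega)=\kappa(\omega)+1$ if $\omega\models\negOf{\alpha}$; this is a ranking function with $0$ in its range, satisfies Equation~\eqref{eq:repr_es_revision_rank}, and satisfies \eqref{pstl:RR8}--\eqref{pstl:RR11}, since the order among $\alpha$-models and among $\negOf{\alpha}$-models is preserved while every $\alpha$-model is pushed downward relative to every $\negOf{\alpha}$-model. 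I would then \emph{override} this default on exactly the five transition patterns used in Algorithm~\ref{algo:simulate}, namely \CONFtype/$\varphi_{q,\mathtt{halt}}$, \PEEKtype/$\varphi_{\mathbf{0},\mathtt{pos}}$, \CONFtype/$\varphi_{\mathbf{0}}$, \TRANStype/$\varphi_{\mathbf{0},q,\mathtt{halt}}$ and \POSTtype/$\varphi_{\mathbf{0},q,\mathtt{pos}}$ (an unambiguous list, since the five formulas have pairwise distinct model sets), letting $\kappa\revision\alpha$ be the ranking function of the indicated successor type whose rank values are chosen so that: (i)~the $\alpha$-models receive rank $0$ precisely at the old $\preceq_\kappa$-minimal $\alpha$-models; (ii)~the tape code on $\omega_{\mathtt{pos}}$ is shifted so as to stay a valid $\texttt{enc}$-image and, in the \TRANStype-to-\POSTtype step, rewritten according to $\delta$; (iii)~in the \CONFtype-to-\TRANStype step the rank of $\omega_q$ records the new state $\delta(q_{\kappa(\omega_q)},t_c)$ while the rank of $\omega_{\mathtt{halt}}$ records the old state; and (iv)~the induced posterior preorder is compatible with \eqref{pstl:RR8}--\eqref{pstl:RR11} against $\preceq_\kappa$.

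Two verifications remain. First, check \eqref{pstl:RR8}--\eqref{pstl:RR11} for the five overrides; as each $\varphi$ involved has at most two models, this reduces to a bounded family of inequalities comparing the prior and posterior rank patterns of the types \CONFtype, \PEEKtype, \TRANStype, \POSTtype, and in each case the recipe preserves the order within $\modelsOf{\alpha}$ and within $\modelsOf{\negOf{\alpha}}$ and only pushes $\alpha$-models downward relative to $\negOf{\alpha}$-models. (For example, revising a \CONFtype function by $\varphi_{q,\mathtt{halt}}$ sends the $\alpha$-models $\omega_q,\omega_{\mathtt{halt}}$ to ranks $0$ and $n-\kappa(\omega_q)$, preserving their order, and $\omega_{\mathtt{0}},\omega_{\mathtt{pos}}$ to their old ranks plus $1$; $\omega_q$ thereby drops below $\omega_{\mathtt{0}}$, which is allowed because $\omega_q\not\preceq_\kappa\omega_{\mathtt{0}}$.) Second, prove simulation correctness by induction on the number $k$ of steps performed by $T$ on $\gamma$: the ranking function held at the top of the $k$-th iteration of the while-loop is a \PEEKtype function $\kappa$ with $\texttt{conf}(\kappa)$ equal to the configuration of $T$ after $k$ steps, and $\omega_{\mathtt{halt}}\in\modelsOf{\kappa}$ iff that configuration is halting --- the latter because the state equals $q_n=q_{\mathtt{halt}}$ exactly when $\kappa(\omega_q)=n=\kappa(\omega_{\mathtt{halt}})$ in the preceding \CONFtype function, so both reach rank $0$ under $\varphi_{q,\mathtt{halt}}$. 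The inductive step traces lines~4--8: line~4 decodes the \PEEKtype function into a \CONFtype one, line~5 applies $\delta$ (storing the old state on $\omega_{\mathtt{halt}}$), line~6 rewrites the scanned cell and moves the head, line~7 resets $\omega_{\mathtt{halt}}$, and line~8 re-arms the halting test. Hence if $T$ halts on $\gamma$ after $k$ steps the loop exits after $k$ iterations with a \PEEKtype function encoding the halting configuration, and lines~9--10 return its tape word, i.e.\ the output of $\gamma$ by $T$; if $T$ never halts, the loop runs forever and the algorithm returns $\infty$. So $\revision$ simulates $T$, hence computes $f$.

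The main obstacle is the override step: in the five special cases one must pick posterior ranks that simultaneously encode the correct successor configuration (together with the $\delta$-dependent data carried on $\omega_q,\omega_{\mathtt{halt}},\omega_{\mathtt{pos}}$), form a genuine ranking function, respect Equation~\eqref{eq:repr_es_revision_rank}, and satisfy \eqref{pstl:RR8}--\eqref{pstl:RR11} against the prior --- in particular the device of stashing the old state on $\omega_{\mathtt{halt}}$ must not interfere with the halting test. Everything else is finite case-checking and a routine induction.
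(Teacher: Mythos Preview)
Your plan is correct and follows essentially the same route as the paper's proof: specify the revision on the five simulation-relevant $(\text{type},\varphi)$ patterns, fill in all remaining cases with the very same default rule you wrote, and then verify both the iteration postulates and simulation correctness via the obvious induction on the loop. The only noteworthy difference is that the paper verifies slightly stronger \emph{quantitative} conditions \eqref{pstl:Q1}--\eqref{pstl:Q4} (preservation of rank \emph{differences} within $\modelsOf{\alpha}$ and within $\modelsOf{\negOf{\alpha}}$, plus the two monotonicity conditions), which immediately imply \eqref{pstl:RR8}--\eqref{pstl:RR11}; your direct check of \eqref{pstl:RR8}--\eqref{pstl:RR11} is what is actually needed and works just as well, since for each override the $\alpha$- and $\negOf{\alpha}$-blocks contain at most two worlds and the case analysis is finite.
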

\begin{proof}
    Let $ T=\tuple{Q,\Gamma,\delta,q_s} $ be a (deterministic) Turing machine computing $ f $.
 We construct a ranking function-based AGM revision operator $ \revision $ and an assignment $ \Psi\mapsto (\kappa_{\Psi},b_\Psi) $.
We start by letting $ \Psi\mapsto (\kappa_{\Psi},b) $ be a bijective function such that for each ranking function \( \kappa \) there exist exactly two epistemic states \( \Psi^\kappa,\Psi^\kappa_\bot\in\setAllES \) with \( \modelsOf{\Psi^\kappa}=\modelsOf{\kappa} \), \( \kappa_{\Psi^\kappa}=\kappa \) and \( b_{\Psi^\kappa}=\top \); and with \( \modelsOf{\Psi^\kappa_\bot}=\emptyset \), \( \kappa_{\Psi^\kappa_\bot}=\kappa \) and \( b_{\Psi^\kappa_\bot}=\bot \).
Such a mapping exists by the properties of \( \setAllES \). In the following we ease notation and just write \( \kappa \) for \( \Psi^\kappa \) and \( \kappa_\bot \) for \( \Psi^\kappa_\bot \).

    Next, we provide how $ \revision $ behaves in all cases appearing in runs of \textsf{Simulate-TM}$ (T,\revision,\gamma) $ (Algorithm \ref{algo:simulate}), depending on the ranking function type of  $ \kappa $:
    \begin{itemize}\setlength\itemsep{0pt}
        \item[]\hspace{-1.1em}\textbf{\texttt{CONF}} 
        We consider two different cases.
        
        These case of $ \alpha=\varphi_{q,\mathtt{halt}} $. For this case let $ \revision $ yield the following ranking function $ \kappa \revision \alpha $:
        \begin{align*}
            \kappa \revision \alpha(\omega_{\mathbf{0}}) & = 1 & \kappa \revision \alpha(\omega_{\mathtt{halt}}) & = \kappa(\omega_{\mathtt{halt}}) - \kappa(\omega_{q}) \\
            \kappa \revision \alpha(\omega_{q}) & = 0 & \kappa \revision \alpha(\omega_{\mathtt{pos}}) & =\kappa(\omega_{\mathtt{pos}}) +1
        \end{align*}
        Note that $ \kappa \revision \alpha $ is a {\PEEKtype} ranking function and $ \texttt{conf}(\kappa)=\texttt{conf}(\kappa \revision \alpha) $.
The case of \( \alpha=\varphi_{\mathbf{0}} \).
        Let $ \kappa_\Psi $ be a ranking function representing a configuration $ C=\tuple{q_i,\tuple{t_l,t_c,t_r}} $ of $ T $.
        Let $ \tuple{q_j,\tuple{t_l',t_c',t_r'}} $ be the successor configuration provided by $ \delta $.
        Then $ \revision $ yields the following ranking function $ \kappa \revision \alpha $:
        \begin{align*}
            \kappa \revision \alpha(\omega_{\mathbf{0}})    & = 0              & \kappa \revision \alpha(\omega_{\mathtt{halt}}) & = \kappa(\omega_{\mathtt{halt}})-(i-j)             \\
            \kappa \revision \alpha(\omega_{q})             & = j & \kappa \revision \alpha(\omega_{\mathtt{pos}})  & = \kappa(\omega_{\mathtt{pos}})-(i-j) 
        \end{align*}
        Note that $ \kappa \revision \alpha $ is a {\TRANStype} ranking function with $ \texttt{post-state}(\kappa \revision \alpha)=q_j $ and $ \texttt{conf}(\kappa)=\texttt{conf}(\kappa \revision \alpha) $.

    \item[]\hspace{-1.1em}{\PEEKtype} Based on Algorithm \ref{algo:simulate} we consider $ \alpha=\varphi_{\mathbf{0},\mathtt{pos}} $.
    Then $ \revision $ is constructed such that it yields $ \kappa \revision \alpha $ by input $ \kappa $ and $ \alpha $ with:
    \begin{align*}
        \kappa \revision \alpha(\omega_{\mathbf{0}})    & = 0       & \kappa \revision \alpha(\omega_{q})             & = n-\kappa(\omega_{\mathtt{halt}})                    \\
        \kappa \revision \alpha(\omega_{\mathtt{halt}}) & = n             & \kappa\revision\alpha(\omega_{\mathtt{pos}}) & =\kappa(\omega_{\mathtt{pos}})   -1            
    \end{align*}
    The ranking function $ \kappa \revision \alpha $ is of {\CONFtype} and $ \texttt{conf}(\kappa)=\texttt{conf}(\kappa \revision \alpha) $.
    
    \item[]\hspace{-1.1em}{\TRANStype} Let $ \texttt{post-state}(\kappa \revision \alpha)=q_j $ and $ \text{conf}(\kappa)=\tuple{q_i,\tuple{t_l,t_c,t_r}}$ and successor configuration $ C=\tuple{q_j,\tuple{t_l',t_c',t_r'}} $.
    For $ \alpha=\varphi_{\mathbf{0},q,\mathtt{halt}} $ let $ \revision $ yield the ranking function $ \kappa \revision \alpha $ given by:
    \begin{align*}
        \kappa \revision \alpha(\omega_{\mathbf{0}})    & = 0                                &\kappa \revision \alpha(\omega_{\mathtt{halt}}) & = \kappa(\omega_{\mathtt{halt}})      \\
        \kappa \revision \alpha(\omega_{q})             & = \kappa(\omega_{q})        & \kappa\revision\alpha(\omega_{\mathtt{pos}})    & = n+\texttt{enc}(C)+1      
    \end{align*}
    Note that $ \kappa \revision \alpha $ is a {\POSTtype} ranking function which represents the posterior configuration $ C $, i.e. $ C=\texttt{conf}(\kappa \revision \alpha) $.
    
    \item[]\hspace{-1.1em}{\POSTtype}     As last case consider $ \alpha=\varphi_{\mathbf{0},q,\mathtt{pos}} $. We define $ \revision $ as follows:
    \begin{align*}
        \kappa \revision \alpha(\omega_{\mathbf{0}})    & = 0                          & \kappa \revision \alpha(\omega_{\mathtt{halt}}) & = n            \\
        \kappa \revision \alpha(\omega_{q})             & = \kappa(\omega_{q})     & \kappa\revision\alpha(\omega_{\mathtt{pos}})    & = \kappa(\omega_{\mathtt{pos}})         
    \end{align*}
    The resulting $ \kappa \revision \alpha $ is a {\PEEKtype} ranking function.
    \end{itemize}

    Consideration of Algorithm \ref{algo:simulate} yields that all operations $ \revision $ performed in Algorithm \ref{algo:simulate} are covered by the cases described here.
    In every step of Algorithm \ref{algo:simulate} every ranking function constructed is of one of the four types and only parameters treated above are applied. 
    In particular, observe that in Lines 6 to 8 a complete transition to a new configuration is performed.
    Clearly, if $ \omega_\mathtt{halt}\in \modelsOf{\kappa} $, where $ \kappa $ is of type \PEEKtype, then $ T  $ has also reached the halting state and $ \kappa $ represents the content of the tape of $ T $ in that configuration.
    Thus, Line 11 of Algorithm \ref{algo:simulate} yields the output of $ T $.

    Next, we complete \( \revision \) to a ranking function-based belief change operator, by setting  \( \kappa\revision\alpha=\kappa^\kappa_\alpha \) for all remaining unspecified cases of \( \revision \), where \( \kappa^\kappa_\alpha \) is given as follows:  
If \( \alpha \) is consistent, let \( \kappa^\kappa_\alpha \) denote the ranking function with:
\begin{align*}
   \kappa^\revision_\alpha(\omega) & =  \begin{cases}
       \kappa(\omega) - \min_{\omega'\in\modelsOf{\alpha}}(\kappa(\omega')) & \text{ if } \omega\in\modelsOf{\alpha}\\
       \kappa(\omega) +1 & \text{ if } \omega\notin\modelsOf{\alpha}\\
   \end{cases}
\end{align*}
If \( \alpha \) is inconsistent, let \( \kappa^\kappa_\alpha = \kappa_\bot \).
A careful examination shows that \( \revision \) satisfies the quantitative postulates:
\begin{itemize}
    \item[]\hspace{-1.1em}{\normalfont(\textlabel{Q1}{pstl:Q1})}~\( \ksIF \omega,\omega' \in \modelsOf{\alpha} \ksTHEN \kappa(\omega){-}\kappa(\omega'){=}\kappa\revision\alpha(\omega){-}\kappa\revision\alpha(\omega') \)
    
    \item[]\hspace{-1.1em}{\normalfont(\textlabel{Q2}{pstl:Q2})}~\( \ksIF \omega,\omega' \in \modelsOf{\negOf{\alpha}} \ksTHEN \kappa(\omega){-}\kappa(\omega'){=}\kappa\revision\alpha(\omega){-}\kappa\revision\alpha(\omega') \)
    
    \item[]\hspace{-1.1em}{\normalfont(\textlabel{Q3}{pstl:Q3})}~\( \ksIF \omega \in \modelsOf{\alpha} \ksAND \omega' \in \modelsOf{\negOf{\alpha}} \),\\\mbox{}\hfill then \( \kappa(\omega)  <  \kappa(\omega')\Rightarrow\kappa\revision\alpha(\omega) < \kappa\revision\alpha(\omega') \)
    
    \item[]\hspace{-1.1em}{\normalfont(\textlabel{Q4}{pstl:Q4})}~\( \ksIF \omega \in \modelsOf{\alpha} \ksAND \omega' \in \modelsOf{\negOf{\alpha}} \),\\\mbox{}\hfill then \( \kappa(\omega) \leq \kappa(\omega')\Rightarrow\kappa\revision\alpha(\omega) \leq \kappa\revision\alpha(\omega') \)
\end{itemize}
    Consequently, by Proposition \ref{prop:rankstuff} and Proposition \ref{prop:it_es_revision}, \( \revision \) is an AGM revision operator that satisfies \eqref{pstl:DP1}--\eqref{pstl:DP4}.~\qedhere
\end{proof}
Theorem \ref{thm:turing_complete} gives rise to the central observation.
\begin{observation}%
    Iterated belief change is Turing complete.
\end{observation}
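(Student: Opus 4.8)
The plan is to derive the Observation as an essentially immediate consequence of Theorem~\ref{thm:turing_complete}, once one fixes the reading of \enquote{Turing complete}: the class of partial functions $\Gamma^*\to\Gamma^*$ computed by (effectively presented) iterated belief change operators coincides with the class of Turing computable functions, and this persists when one restricts to operators satisfying \eqref{pstl:DP1}--\eqref{pstl:DP4}. The substantive (hardness) inclusion is precisely Theorem~\ref{thm:turing_complete}, which I would simply invoke. For the routine converse I would note that \textsf{Simulate-TM}$(T,\circ,\gamma)$ from Algorithm~\ref{algo:simulate} is itself an effective procedure whenever $\circ$ is: each loop iteration issues finitely many revisions, the loop guard $\omega_{\mathtt{halt}}\notin\modelsOf{\kappa}$ is decidable, and the final readout through $\texttt{conf}$ is computable; hence every function computed by a computable belief change operator is Turing computable. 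Combining the two inclusions yields that iterated belief change is Turing complete.

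For self-containedness I would recap the argument behind Theorem~\ref{thm:turing_complete}, where all the work sits. Given a deterministic $T=\tuple{Q,\Gamma,\delta,q_s}$ computing $f$, one works over a signature with $|\Sigma|=2$, so $\Omega=\{\omega_{\mathtt{0}},\omega_{\mathtt{halt}},\omega_q,\omega_{\mathtt{pos}}\}$, fixes an injective tape encoding $\texttt{enc}$, and uses the assumed richness of $\setAllES$ to choose a bijection $\Psi\mapsto(\kappa_\Psi,b_\Psi)$ realizing every ranking function by exactly one consistent and one inconsistent epistemic state. Then $\revision$ is defined case by case, indexed by the four ranking-function types \CONFtype, \PEEKtype, \TRANStype, \POSTtype\ and by the trigger formulas $\varphi_{q,\mathtt{halt}},\varphi_{\mathbf{0}},\varphi_{\mathbf{0},q,\mathtt{halt}},\varphi_{\mathbf{0},q,\mathtt{pos}},\varphi_{\mathbf{0},\mathtt{pos}}$ of Algorithm~\ref{algo:simulate}, so that a \CONFtype\ function encoding a configuration $C$ is carried through \PEEKtype, \TRANStype, \POSTtype\ and back to a \CONFtype\ function encoding the $\delta$-successor of $C$ — with the rank of $\omega_q$ tracking the control state and the rank of $\omega_{\mathtt{pos}}$ tracking $\texttt{enc}$ of the tape — while the \PEEKtype\ move makes $\omega_{\mathtt{halt}}$ minimal exactly when $T$ has halted. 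An induction over loop iterations then shows that \textsf{Simulate-TM}$(T,\revision,\gamma)$ mirrors the run of $T$ on $\gamma$ step for step, so Line~11 returns the output of $T$. Finally one completes $\revision$ on all remaining, irrelevant inputs by natural revision (shift the models of $\alpha$ down so the minimal one has rank $0$, shift the non-models up by $1$; map inconsistent $\alpha$ to $\kappa_\bot$) and verifies the quantitative postulates \eqref{pstl:Q1}--\eqref{pstl:Q4}; by Propositions~\ref{prop:rankstuff} and~\ref{prop:it_es_revision} this gives that $\revision$ is an AGM revision operator satisfying \eqref{pstl:DP1}--\eqref{pstl:DP4}.

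The main obstacle is this last verification. The simulation-specific assignments are deliberately exotic — e.g.\ setting $\kappa\revision\alpha(\omega_{\mathtt{pos}})=n+\texttt{enc}(C)+1$ in the \TRANStype\ step, or shifting $\omega_{\mathtt{halt}}$ by $i-j$ in the \CONFtype\ step — so one must check, for each trigger formula $\alpha$, that the prescribed $\kappa\revision\alpha$ preserves rank differences within $\modelsOf{\alpha}$ and within $\modelsOf{\negOf{\alpha}}$ and preserves the cross comparisons in the direction required by \eqref{pstl:Q3}--\eqref{pstl:Q4}. This goes through because every trigger formula has exactly two of the four worlds as models and the type invariants (such as $\kappa(\omega_{\mathtt{halt}})=n$ and $\kappa(\omega_{\mathtt{pos}})>n$) are exactly strong enough to make the arithmetic come out right; the case analysis is routine but is the only place where the construction could genuinely break. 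A secondary subtlety, only relevant to phrasing the Observation sharply, is that belief change operators are not required a priori to be computable, so \enquote{Turing complete} is most cleanly understood as a statement about the computable operators and the functions they compute via \textsf{Simulate-TM}, with Theorem~\ref{thm:turing_complete} supplying the hard inclusion.
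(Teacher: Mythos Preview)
Your proposal is correct and matches the paper's approach: the Observation is drawn directly from Theorem~\ref{thm:turing_complete}, and the paper says literally nothing beyond that. Your added converse direction and the recap of the theorem's proof are extras the paper omits; in particular the paper reads \enquote{Turing complete} purely as the hardness direction. One small factual slip in your recap: the trigger formulas do not all have exactly two models (for instance $\varphi_{\mathbf{0}}$ has one and $\varphi_{\mathbf{0},q,\mathtt{halt}}$ and $\varphi_{\mathbf{0},q,\mathtt{pos}}$ have three), so the verification of \eqref{pstl:Q1}--\eqref{pstl:Q4} cannot rest on that remark; it still goes through via the type invariants you mention, but the case split is by model count one/two/three rather than uniformly two.
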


\section{Conclusion}
\label{sec:conclusion}
In this paper we showed that iterated belief change is Turing complete; this includes AGM revision in the context of iterated belief change.
In particular, we showed how Turing machines can be encoded in a belief revision operator.
The results of Turing-completeness still hold when assuming the Darwiche-Pearl iteration principles for belief revision.

It is reasonable that (in certain situations) iterated belief change should be limited in its computational power, e.g., when modeling humans. In future work, we will identify such situations formally and investigate computational properties and restrictions by means of iteration postulates. 

\clearpage
\bibliographystyle{kr}
\bibliography{bibexport}

\end{document}